
\documentclass[10pt]{article} 


\usepackage[accepted]{imports/rlj} 

%
%

\usepackage{amssymb}            
\usepackage{mathtools}          
\usepackage{mathrsfs}           
\usepackage{graphicx}           
\usepackage{subcaption}         
\usepackage[space]{grffile}     
\usepackage{url}                
\usepackage{lipsum}             


\title{An Analysis of Action-Value Temporal-Difference\\ Methods That Learn State Values}

\setrunningtitle{An Analysis of Action-Value TD Methods That Learn State Values}


\author{
Brett Daley\textsuperscript{1,2,$\dagger$},
Prabhat Nagarajan\textsuperscript{1,2,$\dagger$},
Martha White\textsuperscript{1,2,3},\\
Marlos C.\ Machado\textsuperscript{1,2,3}
}


\emails{\{brett.daley,nagarajan,whitem,machado\}@ualberta.ca}

\affiliations{
$^{1}$\textbf{Department of Computing Science, University of Alberta, Canada}\\
$^{2}$\textbf{Alberta Machine Intelligence Institute (Amii)}\\
$^{3}$\textbf{Canada CIFAR AI Chair}
\par 
$^\dagger$ Co-first authors.
}

\contribution{
    We prove the expected contraction of QV-learning for on-policy prediction.
    }
    {
    \citet{wiering2005qv} introduced the QV-learning algorithm, but omitted a convergence proof.
    To our knowledge, there is no published convergence proof of QV-learning to date.
    }

\contribution{
We raise the issue that QVMAX, the main off-policy control variant of QV-learning, is biased.
We empirically demonstrate that such bias can significantly impact performance.
We then introduce a new, unbiased algorithm, BC-QVMAX, and empirically demonstrate that it converges to a similar solution to that of Q-learning in our considered settings.
}
{\citet{wiering2009qv} introduced QVMAX without theoretical justification, heuristically mirroring the Q-learning update.
Its bias has not been identified until~now.
Our empirical results were obtained with parametric MDP experiments that we designed;
they should be interpreted only as anecdotal evidence for the convergence of BC-QVMAX.
}


    %

\contribution{In the context of AV-learning algorithms, we formalize a tabular version of Dueling DQN, and we introduce a new algorithm, Regularized Dueling Q-learning (RDQ). RDQ addresses the identifiability issue of the naive dueling decomposition by using an $l_2$ penalty instead of subtracting the mean advantage.
We empirically demonstrate that, given the same network architecture, RDQ significantly outperforms Dueling DQN in the MinAtar domain.
}
    {
    \citet{wang2016dueling} introduced Dueling DQN from the perspective of an improvement to the neural network architecture used by DQN.
    RDQ is the result of relaxing the semantics behind estimating $Q(s,a)$ through two value functions. Instead of generating $Q(s,a)$ from approximations of $V(s)$ and $A(s,a)$, RDQ searches for the closest point on the hyperplane defined by two arbitrary functions that sum to $Q(s,a)$.
    We used tuned versions of DQN and Dueling DQN as baselines for MinAtar~\citep{ceron2021revisiting}.
    }

\keywords{TD learning, QV-learning, Dueling DQN, advantage estimation.} 

\summary{

The hallmark feature of temporal-difference (TD) learning is bootstrapping:
using value predictions to generate new value predictions.
The vast majority of TD methods for control learn a policy by bootstrapping from a single action-value function (e.g., Q-learning and Sarsa).
Significantly less attention has been given to methods that bootstrap from two asymmetric value functions:
i.e., methods that learn state values as an intermediate step in learning action values.
Existing algorithms in this vein can be categorized as either QV-learning or AV-learning.
Though these algorithms have been investigated to some degree in prior work, it remains unclear if and when it is advantageous to learn two value functions instead of just one---and whether such approaches are theoretically sound in general.
In this paper, we analyze these algorithmic families in terms of convergence and sample efficiency.
We find that while both families are more efficient than Expected Sarsa in the prediction setting, only AV-learning methods offer any major benefit over Q-learning in the control setting.
Finally, we introduce a new AV-learning algorithm called Regularized Dueling Q-learning (RDQ), which significantly outperforms Dueling DQN in the MinAtar benchmark.

}

\graphicspath{{images/}{images/minatar/}{images/tabular/}}

\usepackage{amsmath,amsfonts,bm}









\def\eqref#1{equation~\ref{#1}}









\def\1{\bm{1}}







\def\vzero{{\bm{0}}}

\def\vtheta{{\bm{\theta}}}

\def\vb{{\bm{b}}}

\def\vq{{\bm{q}}}
\def\vr{{\bm{r}}}

\def\vv{{\bm{v}}}

\def\vy{{\bm{y}}}


\def\mA{{\bm{A}}}

\def\mE{{\bm{E}}}

\def\mP{{\bm{P}}}

\DeclareMathAlphabet{\mathsfit}{\encodingdefault}{\sfdefault}{m}{sl}
\SetMathAlphabet{\mathsfit}{bold}{\encodingdefault}{\sfdefault}{bx}{n}











\newcommand{\E}{\mathbb{E}}




\usepackage{paper_preamble}

\begin{document}

\makeCover  
\maketitle  

\begin{abstract}
    
\end{abstract}

\section{Introduction}

Reinforcement learning (RL) is the study of decision-making agents that interact with their environment to maximize a notion of cumulative reward.
Temporal-difference (TD) learning \citep{sutton1988learning} is among the most widely used classes of RL algorithms.
Like dynamic programming \citep{bellman1957dynamic}, TD methods learn one or more value functions to guide policy improvement.
However, unlike dynamic programming, TD approaches do not assume access to or knowledge of the environment's dynamics.
As such, value-function estimates must be generated iteratively through direct interaction with the MDP.

A key feature of TD is bootstrapping:
constructing new value predictions from other value predictions.
How such predictions should be integrated for the purposes of bootstrapping to learn as efficiently as possible has been the subject of much RL research.
Classic methods like Q-learning \citep{watkins1989learning}, Sarsa \citep{rummery1994line}, and Expected Sarsa \citep{john1994when} all learn action-value functions, and differ only by their bootstrapped targets, but exhibit dramatically different properties in terms of risk aversion, bias-variance trade-off, and convergence.

The vast majority of TD algorithms that attempt to learn good control policies---including the three algorithms mentioned above---rely on just a single action-value function, denoted by $Q(s,a)$ where $(s,a)$ is a state-action pair.
A far less common paradigm is to jointly learn a secondary state-value function, $V(s)$, in the process of learning $Q(s,a)$.
That is to say, such TD methods learn state values as an intermediate step in learning action values.\footnote{
    Note that this precise definition of learning state and action values excludes double action-value methods such as Double Q-learning \citep{van2010double}, which are beyond the scope of this paper.
}

We broadly categorize such methods as either \emph{QV-learning} or \emph{AV-learning}.\footnote{
    We introduce the term \emph{AV-learning} to refer to value-based RL methods that learn both advantage and state-value functions.
    AV-learning should not be confused with VA-learning \citep{tang2023va}, a specific algorithmic instance of AV-learning.
    Although VA-learning is inspired by Dueling DQN, it is not the exact tabular analog of it, which we derive in \Cref{subsec:dueling_q-learning}.
}
In QV-learning, the agent directly estimates $Q(s,a)$ and $V(s)$, with some bootstrapping between the two value functions.
For instance, the classic QV-learning method \citep{wiering2005qv} essentially replaces the bootstrap term of Expected Sarsa with a learned approximation generated by TD(0) \citep{sutton1988learning}.
Conversely, AV-learning methods make use of the advantage decomposition \citep{baird1993advantage}, which breaks down the action value as $Q(s,a) = V(s) + \Adv(s,a)$.
In contrast to QV-learning, the two value functions do not directly bootstrap from one another, but rather bootstrap from the composite action-value function, $Q(s,a)$.
AV-learning techniques were popularized in deep RL by the dueling network architecture \citep{wang2016dueling}.
Analyzing the behavior of its underlying algorithm, Dueling Q-learning, is important to understand how these networks learn.

In spite of past empirical evidence supporting QV-learning \citep[e.g.,][]{sabatelli2020deep,modayil2023towards} and AV-learning \citep{baird1993advantage,wang2016dueling,tang2023va}, the exact circumstances and mechanisms behind these improvements remain unclear.
After all, estimating two value functions instead of one inherently requires more parameters to be learned.
On the other hand, it seems that there are opportunities for synergistic information sharing between the value functions which could potentially explain the observed sample-efficiency gains.
Still, it is unknown when, or how reliably, such gains can be obtained.
Furthermore, several of these methods---particularly in the QV-learning family---are missing formal convergence guarantees.

Toward an understanding of when and how state-value estimation can be leveraged to learn action values more efficiently (and soundly), we investigate the theoretical underpinnings of both QV-learning and AV-learning algorithms and conduct experiments to isolate some of their unique properties.
We show that QV-learning can be more efficient than Expected Sarsa for on-policy prediction and that the performance gap increases with the cardinality of the action set.
However, QV-learning's main extension for off-policy control, QVMAX \citep{wiering2009qv}, suffers from bias;
even when we correct this bias, we show empirically that its sample efficiency is surpassed by that of Q-learning.
In contrast, we find that AV-learning methods work much more consistently for control problems overall and easily outperform Q-learning.

We also discuss the implicit assumptions underlying Dueling Q-learning and introduce an algorithm called Regularized Dueling Q-learning (RDQ) that converges to a different pair of value functions.
An advantage of RDQ is that it easily extends to function approximation, and we show that it is significantly more sample efficient than Dueling DQN \citep{wang2016dueling} in the five MinAtar games \citep{young2019minatar} when using the same network architecture and hyperparameters.
All of our experiment code is available online.\footnote{
    \url{https://github.com/brett-daley/reg-duel-q}
}
Our results help to characterize the nuanced distinction between these two algorithm families, and ultimately motivate state-value learning as a sound and effective way to accelerate action-value learning.

\section{Background}
\label{sec:background}

We formalize the RL problem as a finite Markov decision process (MDP) described by $(\S, \A, p, \mathcal{R}, \gamma)$.
At each time step $t \geq 0$, the agent observes the environment state, $S_t \in \mathcal{S}$, and executes an action, $A_t \in \A$.
The environment consequently transitions to a new state, $S_{t+1} \in \S$, and returns a reward, $R_{t+1} \in \mathcal{R}$, with probability $p(S_{t+1},R_{t+1} \mid S_t, A_t)$.

In the policy-evaluation setting, the agent's goal is to learn the action-value function:
the expected discounted return achieved by each state-action pair $(s,a)$ under a fixed agent behavior.
The agent's behavior is defined by a policy, a mapping from states to distributions over actions.
Letting $G_t = \sum_{i=0}^\infty \gamma^i R_{t+1+i}$ be the discounted return at time $t$, the action-value function for policy $\pi$ is defined as
\looseness=-1
\begin{equation*}
    q_\pi(s,a) = \mathbb{E}_\pi[G_t \mid (S_t,A_t) = (s,a)]
    \,,
\end{equation*}
where the expectation $\mathbb{E}_\pi$ indicates that actions are sampled according to $\pi$.
For any policy $\pi$, the corresponding action-value function $q_\pi$ uniquely solves the Bellman equation \citep{bellman1957dynamic} for action values: 
\begin{equation}
    \label{eq:bellman_q}
    q_\pi(s,a) = \sum_{s' \in \mathcal{S}} \sum_{r \in \mathcal{R}} p(s',r|s,a) \left( r + \gamma \sum_{a' \in \mathcal{A}} \pi(a'|s') \, q_\pi(s',a') \right)
    \,.
\end{equation}

In the control setting, the agent's goal is to find an \emph{optimal} policy, $\pi_*$:
one such that $q_{\pi_*}(s,a) \geq q_\pi(s,a)$ for every policy $\pi$ and every state-action pair $(s,a)$.
Every optimal policy has the same action-value function, $q_*$, which uniquely solves the action-value Bellman optimality equation:
\begin{equation*}
    q_*(s,a) = \sum_{s' \in \mathcal{S}} \sum_{r \in \mathcal{R}} p(s',r|s,a) \left( r + \gamma \max_{a' \in \mathcal{A}} q_*(s',a') \right)
    \,.
\end{equation*}

Temporal-difference (TD) methods for control estimate action-value functions, either $q_\pi$ or $q_*$, from sample-based interaction with the environment.
Given a transition $(S_t,A_t,R_{t+1},S_{t+1})$, the agent then conducts an incremental update to its estimate of the action-value function, $Q(S_t,A_t)$.
In the \emph{off-policy} setting, the agent is assumed to select actions with probability $b(A_t|S_t)$, where $b$ is a behavior policy which differs from the target policy, $\pi$.

Most of the algorithms considered in this paper are off-policy methods.
They learn an action-value function $Q(s,a)$ that estimates either $q_\pi(s,a)$ or $q_*(s,a)$ using samples obtained from executing policy $b$ in the environment.
For example, Expected Sarsa \citep{john1994when} is an off-policy TD method that uses explicit knowledge of the target policy $\pi$ to approximate the solution to \Cref{eq:bellman_q} from sampled data:
\begin{equation}
    \label{eq:expected_sarsa}
    Q(S_t,A_t) \gets Q(S_t,A_t) + \alpha \left(R_{t+1} + \gamma \sum_{a' \in \A} \pi(a'|S_{t+1}) Q(S_{t+1},a') - Q(S_t,A_t) \right)
    \,,
\end{equation}
where $\alpha \in (0,1]$ is the step size of the update.
We revisit Expected Sarsa as an important baseline several times in this paper because it generalizes a number of fundamental bootstrapping algorithms \citep{van2011insights}.
For instance, when the target policy is greedy with respect to $Q$, the expectation in \Cref{eq:expected_sarsa} becomes equivalent to $\max_{a' \in \A} Q(s',a')$, yielding the Q-learning algorithm:
\begin{equation}
    \label{eq:q-learning}
    Q(S_t,A_t) \gets Q(S_t,A_t) + \alpha \left(R_{t+1} + \gamma \max_{a' \in \A}  Q(S_{t+1},a') - Q(S_t,A_t) \right)
    \,,
\end{equation}
which will be useful for the control case in \Cref{subsec:qvmax}.

\subsection{QV-learning}

We now begin to discuss methods that learn state values as a means to estimate action values.
The state-value function, $v_\pi$, is defined analogously to $q_\pi$ as
\begin{equation*}
    v_\pi(s) = \mathbb{E}_\pi[G_t \mid S_t = s]
\end{equation*}
and uniquely solves the Bellman equation
\begin{equation}
    \label{eq:bellman_v}
    v_\pi(s) = \sum_{a \in \A} \pi(a|s) \sum_{s' \in \S} \sum_{r \in \mathcal{R}} p(s',r|s,a) \, \big(r + \gamma \mathop{v_\pi(s')}\big)
    \,.
\end{equation}
The state-value function is related to the action-value function by $v_\pi(s) = \sum_{a \in \A} \pi(a|s) q_\pi(s,a)$.
This implies that an approximation of $v_\pi(s')$ can be substituted in \Cref{eq:bellman_q} to avoid the summation over actions.
QV-learning \citep{wiering2005qv} is an on-policy TD algorithm based on this concept.
The idea is to use TD(0) to independently learn $V(s) \approx v_b(s)$, while concurrently bootstrapping from these estimated state values to learn $Q(s,a) \approx q_b(s,a)$.
The specific value-function updates for QV-learning become
\begin{align}
    \label{eq:qv-learning_q}
    Q(S_t,A_t) &\gets Q(S_t,A_t) + \alpha \Big(R_{t+1} + \gamma V(S_{t+1}) - Q(S_t,A_t) \Big)

    \,, \\*
    \label{eq:qv-learning_v}
    V(S_t) &\gets V(S_t) + \alpha \Big(R_{t+1} + \gamma V(S_{t+1}) - V(S_t) \Big)
    \,.
\end{align}
\citet[][Sec.~2]{wiering2005qv} is explicit that $Q$ is always updated before $V$ in each iteration.
Additionally, it is commonly assumed that both updates share the same step size, $\alpha$, as is shown above.
We analyze this algorithm in \Cref{subsec:on-policy_prediction}.

The main variant of QV-learning for off-policy control is QVMAX \citep{wiering2009qv}, which essentially substitutes the max operator from Q-learning into \Cref{eq:qv-learning_v} in an attempt to induce convergence to $q_*$ instead of $q_b$:
\begin{align}
    \tag{\ref*{eq:qv-learning_q}}
    
    \,, \\*
    \label{eq:qvmax_v}
    V(S_t) &\gets V(S_t) + \alpha \Big( R_{t+1} + \gamma \max_{a' \in \A}  Q(S_{t+1},a') - V(S_t) \Big)

    \,.
\end{align}
The update to $Q(S_t,A_t)$ remains the same as \Cref{eq:qv-learning_q} and is still conducted prior to \Cref{eq:qvmax_v} at each time step.
We analyze this algorithm in \Cref{subsec:qvmax}.

\subsection{Dueling Q-learning}
\label{subsec:bg_dueling}

The basic idea behind Dueling Q-learning \citep{wang2016dueling} is to decompose the action-value function, $Q(s,a)$, into a state-value function, $V(s)$, and an advantage function, $\Adv(s,a)$.
The particular decomposition used by \citet{wang2016dueling} is
\begin{equation}
    \label{eq:dueling_mean}
    Q(s,a) = V(s) + \Adv(s,a) - \frac{1}{\lvert \mathcal{A} \rvert} \sum_{a' \in \A} \Adv(s,a')
    \,,
\end{equation}
where the last term is subtracted to make the solution unique (we further discuss this point in \Cref{subsec:soft_rdq}).
The updates to $V(s)$ and $\Adv(s,a)$ are then derived implicitly from Q-learning by reframing its update as stochastic semi-gradient descent.
Let $\vtheta$ be the vector of all learnable value-function parameters involved in \Cref{eq:dueling_mean}.
We therefore write $Q(s,a;\vtheta)$ for the action-value estimate of state-action pair $(s,a)$.
Additionally, let $\vtheta^-$ be the target parameters for bootstrapping \citep{mnih2015human}.
The Q-learning update in \Cref{eq:q-learning} is rewritten as the following squared-error loss minimization:
\begin{align}
    \nonumber
    \vtheta &\gets \vtheta - \alpha \mathop{\frac{1}{2}} \gradvtheta \left(R_{t+1} + \gamma \max_{a' \in \A} Q(S_{t+1},a';\vtheta^-) - Q(S_t,A_t;\vtheta) \right)^2 \\*
    \label{eq:q-learning_sgd}
    &= \vtheta + \alpha \left(R_{t+1} + \gamma \max_{a' \in \A} Q(S_{t+1},a';\vtheta^-) - Q(S_t,A_t;\vtheta)\right) \gradvtheta Q(S_t,A_t;\vtheta)
    \,.
\end{align}
Applying the chain rule to $\gradvtheta Q(S_t,A_t;\vtheta)$ in \Cref{eq:q-learning_sgd} according to the value decomposition in \Cref{eq:dueling_mean} yields the final dueling update.
In practice, the decomposition in \Cref{eq:dueling_mean} is represented by a branching and merging neural network, and the chain rule is handled by automatic differentiation.
In \Cref{subsec:dueling_q-learning}, we instead apply the chain rule to \Cref{eq:dueling_mean} manually, assuming tabular value functions, to derive a Dueling Q-learning update without function approximation.
This allows us to gain further insight into this fundamental algorithm beyond the deep RL setting.

\section{QV-learning Algorithms}
\label{sec:qv-learning}

In this section, we investigate QV-learning algorithms, which jointly learn an action-value function $Q(s,a)$ and a state-value function $V(s)$.
Although the fundamental QV-learning algorithm has existed for around 20 years and several studies have pointed to its effectiveness \citep[e.g.,][]{wiering2005qv,wiering2007two,sabatelli2020deep,modayil2023towards}, empirical and theoretical analysis has still been limited.

\subsection{On-Policy Prediction}
\label{subsec:on-policy_prediction}

In this subsection, we conduct a focused experiment to determine when QV-learning improves performance versus a single action-value function.
We then conclude with an expected convergence analysis of QV-learning.

Throughout this paper, we will revisit Expected Sarsa as a prototypical baseline for TD learning of action values.
Recall that Expected Sarsa's update rule is
\begin{equation}
    \tag{\ref*{eq:expected_sarsa}}  
    Q(S_t,A_t) \gets Q(S_t,A_t) + \alpha \Big( R_{t+1} + \gamma \underbrace{\sum_{a' \in \A} \pi(a'|S_{t+1}) Q(S_{t+1},a')}_{\approx v_\pi(S_{t+1})} - \mathop{Q(S_t,A_t)} \Big)
    \,.
\end{equation}
The comparison to Expected Sarsa is natural because QV-learning can be seen as learning an approximation, $V(S_{t+1})$, of the expected next action value, which we have underscored above.
This provides an indication as to when QV-learning may have an advantage over Expected Sarsa;
as the action-space cardinality $\abs{\A}$ of the MDP grows, Expected Sarsa requires many more sample interactions to estimate all of the action values in the expectation, whereas QV-learning can simply learn $V(S_{t+1})$ with a small number of samples that is roughly insensitive to the number of actions, $\abs{\A}$.

To test this, we design a parametric MDP experiment where the environment has $4$ states and $\abs{\A}$ actions.
Taking any action in any state triggers a transition to a random state (possibly the same one) with equal probability.
The agents' behavior policy is uniform random in every state.
Agents receive a reward of $+1$ whenever the first action, $a_0$, is taken, and a reward of $0$ otherwise.

Because the MDP dynamics are state-invariant, $v_b(s)$ is constant for all $s \in \mathcal{S}$.
Solving the state-value Bellman equation, \Cref{eq:bellman_v}, yields $v_b(s) = 1 \mathbin{/} \big((1-\gamma) \, \abs{\A}\big)$.
It follows that $q_b(s,a_i) = \indicator{i=0} + \gamma \mathbin{/} \big((1-\gamma) \, \abs{\A}\big)$ for all $a_i \in \mathcal{A}$, where $\mathbf{1}$ is the indicator function.
We set $\gamma = 0.99$.

We evaluate the agents' performance by measuring the root mean square (RMS) prediction error, $\|q_b - Q\|_2$, as a function of the number of environment interactions.
We normalize the errors by expressing them as a percentage of the initial RMS error, $\|q_b - Q_0\|_2$, where $Q_0$ is initialized with zeros.
We compare Expected Sarsa and QV-learning, training both agents for 20,000 time steps.
In \Cref{fig:on-policy_prediction} (left), we plot the learning curves for the case where $\abs{\A}=18$, the largest instantiation of our action set that we consider here.
We then average the results over 100 independent trials, with shading indicating 95\% confidence intervals.
The agents' step sizes are chosen from a natural-logarithmic grid search of 61 values over the interval $(0,1]$, similarly to the random walk experiment by \citet[][Sec.~12.1]{sutton2018reinforcement}.
The selection criterion is to minimize the area under the curve (AUC), which we normalized as a percentage.
Specifically, an AUC of 100\% indicates that the initial error did not change at all, whereas a lower percentage indicates faster average progress towards the fixed point.
In \Cref{fig:on-policy_prediction} (center), we plot the corresponding AUCs for each step size tested by the sweep.
The dashed horizontal line corresponds to the smallest AUC achieved by each agent.

\begin{figure*}[t]
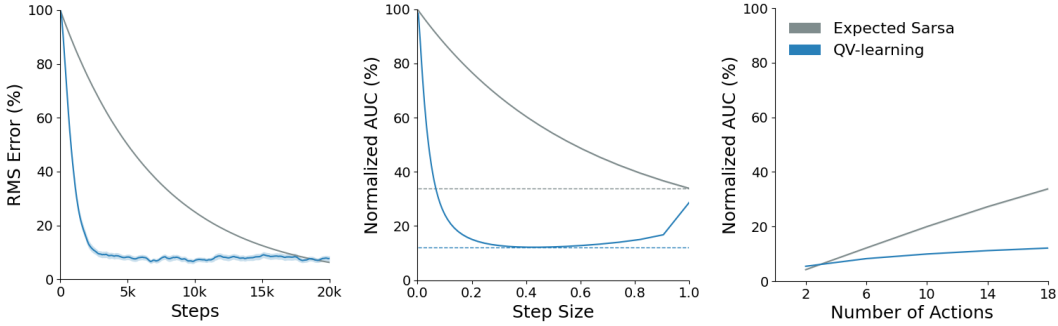

    \centering
    \includegraphics[width=0.32\textwidth]{on-policy_rms-vs-time_18actions.png}
    \hfill
    \includegraphics[width=0.32\linewidth]{on-policy_auc-vs-stepsize_18actions.png}
    \hfill
    \includegraphics[width=0.32\linewidth]{on-policy_auc-vs-actions.png}
    \caption{
        On-policy prediction performance of QV-learning compared to Expected Sarsa in the 4-state MDP.
        (Left)~Root mean square (RMS) prediction error versus the number of learning steps, after optimizing the step size, $\alpha \in (0, 1]$.
        (Center)~Area under the learning curve (AUC), normalized as a percentage, for each step size.
        The dashed lines correspond to the AUCs for the learning curves in the left subplot.
        (Right)~The smallest AUC obtained across all step sizes, as the number of MDP actions is increased.
        Averaged across 100 trials; shading represents 95\% confidence intervals.
        \looseness=-1
    }
    \label{fig:on-policy_prediction}
\end{figure*}

To investigate the scalability of these methods, we repeat this experiment setup for ${\abs{\A} \in \{2, 6, 10, 14, 18\}}$.
We then plot the best AUC achieved in each problem instance in \Cref{fig:on-policy_prediction} (right).
The slope of the resulting lines roughly correspond to the scalability of the algorithms.
We note that QV-learning's line is more horizontal, indicating better scaling to large action-space cardinalities, as we hypothesized earlier.
However, the trade-off for this improved sample efficiency is a much noisier update, which can be seen in \Cref{fig:on-policy_prediction} (left);
if we were to train for longer, Expected Sarsa would eventually obtain a more accurate solution.
This indicates that QV-learning is perhaps best suited for cases where learning a satisfactory---but possibly imperfect---value function is desired within a small number of samples.

These results do not, however, suggest that QV-learning fails to converge to the correct pair of on-policy value functions, $q_b$ and $v_b$.
The noisy behavior observed in \Cref{fig:on-policy_prediction} (left) is primarily due to the fact that the step size is not annealed.
The following theorem states that, in expectation, the joint update of QV-learning is a contraction mapping when we represent the two value functions as a concatenated vector of size $\abs{\S} + \abs{\S \times \A}$.

\begin{restatable}[QV-learning Contraction]{theorem}{qvlearningcontraction}
    \label{theorem:qv-learning_contraction}
    The expected QV-learning update corresponds to an affine joint operator
    $H \colon \qv
    \mapsto \vb + \mA \qv$,
    where
    $\vb = \begin{bsmallmatrix}
        \vr \\
        \mE_b \vr \\
    \end{bsmallmatrix}$
    and
    $\mA = \gamma \begin{bsmallmatrix}
        \vzero & \mP \\
        \vzero & \mE_b \mP \\
    \end{bsmallmatrix}$.
    The operator $H$ is a contraction mapping with its unique fixed point equal to $\qvb$.
\end{restatable}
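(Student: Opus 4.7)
My plan is to recognize the expected joint QV-learning update as a standard contractive affine operator, and to establish both the contraction and the fixed-point claims by block-matrix manipulations. First I would take expectations over $R_{t+1}$ and $S_{t+1}$ conditionally on $(S_t, A_t)$ for \Cref{eq:qv-learning_q}, and conditionally on $S_t$ for \Cref{eq:qv-learning_v} with $A_t \sim b(\cdot | S_t)$. Under the behavior policy $b$, the one-step reward terms vectorize to $\vr$ and $\mE_b \vr$, while the bootstrapped state-value terms vectorize to $\gamma \mP \vv$ and $\gamma \mE_b \mP \vv$, where $\mE_b$ is the action-averaging operator acting as $(\mE_b \vu)(s) = \sum_a b(a|s) \vu(s,a)$. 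Stacking the two expected increments into a single vector in $\mathbb{R}^{|\mathcal{S}| + |\mathcal{S} \times \mathcal{A}|}$ then gives $\E[\Delta \qv] = \alpha(H(\qv) - \qv)$ with $H(\qv) = \vb + \mA \qv$ exactly as stated; in particular, because neither target depends on $\vq$, the first block-column of $\mA$ is zero.

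Next I would show $H$ is a $\gamma$-contraction in $\|\cdot\|_\infty$. Writing the difference of two inputs as $\qv_1 - \qv_2 = (\vx, \vy)$ with $\vx$ on the $Q$-block and $\vy$ on the $V$-block, the vanishing of the $Q$-columns of $\mA$ gives $H \qv_1 - H \qv_2 = \mA(\qv_1 - \qv_2) = \gamma \bigl(\mP \vy,\; \mE_b \mP \vy\bigr)$, so $\vx$ drops out entirely. Both $\mP$ and $\mE_b \mP$ are row-stochastic (the latter because $\mE_b$ is a convex averaging), hence each is a non-expansion in $\ell_\infty$, giving $\|H \qv_1 - H \qv_2\|_\infty \le \gamma \|\vy\|_\infty \le \gamma \|\qv_1 - \qv_2\|_\infty$. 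Banach's fixed-point theorem on the finite-dimensional complete space $\bigl(\mathbb{R}^{|\mathcal{S}| + |\mathcal{S} \times \mathcal{A}|}, \|\cdot\|_\infty\bigr)$ then yields a unique fixed point of $H$.

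To identify that fixed point as $\qvb$, I would substitute $(\vq_b, \vv_b)$ into $H$ and check the two blocks. The $Q$-block reads $\vr + \gamma \mP \vv_b = \vq_b$, which is the vectorized form of the action-value Bellman equation in \Cref{eq:bellman_q} for policy $b$. Applying $\mE_b$ to both sides of this identity, together with the standard relation $v_b(s) = \sum_a b(a|s) q_b(s,a)$, i.e.\ $\vv_b = \mE_b \vq_b$, yields $\mE_b \vr + \gamma \mE_b \mP \vv_b = \mE_b \vq_b = \vv_b$, which is the $V$-block. The main subtlety I anticipate is simply keeping the block dimensions of $\mP$ (shape $|\mathcal{S} \times \mathcal{A}| \times |\mathcal{S}|$) and $\mE_b$ (shape $|\mathcal{S}| \times |\mathcal{S} \times \mathcal{A}|$) straight when assembling $\mA$; the contraction argument itself is very clean, because the zero $Q$-column of $\mA$ prevents any coupling amplification between the two value functions and lets the joint operator inherit the standard TD(0) contraction factor $\gamma$.
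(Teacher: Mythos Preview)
Your proposal is correct and follows essentially the same route as the paper: derive the affine joint operator from the expected QV-learning updates, show the $\ell_\infty$-contraction via the row-stochasticity of $\mP$ and $\mE_b\mP$ (the paper phrases this as ``each row of $\mA$ is a probability distribution scaled by $\gamma$''), and verify the fixed point using the Bellman identity $\vq_b = \vr + \gamma \mP \vv_b$ together with $\vv_b = \mE_b \vq_b$. Your explicit observation that the zero $Q$-column of $\mA$ makes the $\vq$-component drop out of the contraction estimate is exactly what underlies the paper's row-sum argument.
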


\begin{proof}
    See \Cref{subapp:qv-learning_contraction}.
\end{proof}

The significance of this result is that it shows that both of QV-learning's value functions make progress (on average) towards their respective fixed points, without requiring $V$ to converge first.
This is in contrast to less formal convergence arguments for QV-learning, which may rely on the assumption that TD(0) would first converge to $v_b$ based on well-established convergence results, and then the update rule in \Cref{eq:qv-learning_q} would be able to bootstrap from it to extract $q_b$.
The issue with this two-timescale approach is that $V$ may never exactly equal $v_b$ after any finite amount of time, meaning that $Q$ would still incur some bootstrapping bias---a caveat that is directly addressed by considering the joint operator space, as we have done here.

The fact that $H$ is a contraction mapping implies that both $Q$ and $V$ converge to their respective fixed points even when updates are asynchronous, under the additional (but standard) technical assumptions that the step size $\alpha$ is appropriately annealed and the conditional variances of the updates are bounded \citep[see][Prop.~4.4]{bertsekas1996neuro}.
We note that the latter assumption is automatically satisfied by our MDP definition, which has finite sets of states, actions, and rewards.
Finally, we remark that our analysis considers only the prediction setting in which the behavior policy $b$ is fixed;
for the control case, we would likely need to invoke the ``greedy in the limit with infinite exploration'' (GLIE) assumption \citep{singh2000convergence} to prove eventual convergence to an optimal policy, but we leave this as an open problem.
\looseness=-1

\subsection{Off-Policy Control}
\label{subsec:qvmax}

The previous experiment shows that bootstrapping from state values when learning action values can be much more efficient than directly learning the action values.
Unfortunately, we cannot leverage this same technique for off-policy control, as this would require a model-free method for estimating $v_*$ directly from environment samples.

To circumvent this, \citet{wiering2009qv} introduced an off-policy variant of QV-learning called QVMAX which borrows the max operator from Q-learning when updating $V(S_t)$:
\begin{align}
    \tag{\ref*{eq:qv-learning_q}}
    
    \,, \\
    \tag{\ref*{eq:qvmax_v}}
    
    \,.
\end{align}
\begin{wrapfigure}{r}{0.45\textwidth}
    \vspace{-0.1in}
    \centering
    \vbox to 1.5cm{
    \begin{tikzpicture}[node distance=1.5cm, >=stealth]
        \node[state, draw=none] (v1) {$V$};
        \node[state, draw=none, right of=v1] (q1) {$Q$};
        \draw (v1) edge[loop left] node{} (q1)
              (v1) edge[right] node[below=0.5cm]{QV-learning} (q1);
        \node[state, draw=none, right of=q1] (v2) {$V$};
        \node[state, draw=none, right of=v2] (q2) {$Q$};
        \draw (v2) edge[bend left] node{} (q2)
              (q2) edge[bend left] node[below=0.2cm]{QVMAX} (v2);
        \draw [black, thick] (current bounding box.south west) rectangle (current bounding box.north east);
    \end{tikzpicture}
}
    \caption{Depiction of bootstrapping in QV-learning variants.
    The arrows point from the bootstrapped value function to the value function being updated.}
    \label{fig:qv_flowchart}
\end{wrapfigure}
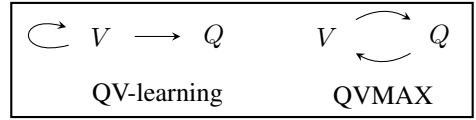
One consequence of this change is that now there is reciprocal bootstrapping between $Q$ and~$V$.
Previously, with QV-learning, we had only a unidirectional information flow:
$V$ bootstrapped from itself, and $Q$ bootstrapped from $V$.
Hence, $Q$ could not corrupt the state values in any way.
We illustrate this distinction in \Cref{fig:qv_flowchart}.
Although reciprocal bootstrapping is not necessarily an undesirable property, it is worth noting that a fundamental characteristic of the algorithm has changed.

A more serious consequence of this change is that the update now suffers from off-policy bias.
\citet{wiering2009qv} hypothesized that the max operator in \Cref{eq:qvmax_v} would induce convergence to $q_*$, as it does in Q-learning.
Unfortunately, this is not true, as we show in the next proposition.

\medskip  
\begin{restatable}{proposition}{qvmaxfp}
    \label{prop:qvmax_fp}
    Unless the behavior policy, $b$, satisfies
    ${\mathbb{E}_b[R_{t+1} + \gamma \mathop{v_*(S_{t+1})} \mid S_t = s]} = {\mathbb{E}_{\pi_*}[R_{t+1} + \gamma \mathop{v_*(S_{t+1})} \mid S_t = s]}$
    for all $s \in \S$, then $\qvstar$ is not a fixed point of QVMAX.
\end{restatable}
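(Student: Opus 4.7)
The plan is to identify necessary conditions for $(q_*, v_*)$ to be a joint fixed point of the expected QVMAX updates, and observe that one of them is exactly the condition in the proposition. I would treat ``fixed point'' in the same expected-update sense as in \Cref{theorem:qv-learning_contraction}: since QVMAX applies the $Q$-update and $V$-update in sequence, $(q_*, v_*)$ is a fixed point of the joint operator if and only if the expected TD error of each update vanishes when evaluated at $(q_*, v_*)$.

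First, I would set the expected $Q$-update in \Cref{eq:qv-learning_q} to zero at $(Q, V) = (q_*, v_*)$, conditioned on $(S_t, A_t) = (s, a)$. This gives
\[
q_*(s, a) = \mathbb{E}[R_{t+1} + \gamma v_*(S_{t+1}) \mid S_t = s, A_t = a],
\]
which holds automatically, since it is just the action-value Bellman optimality equation combined with $v_*(s') = \max_{a'} q_*(s', a')$. Because we condition on $A_t$, the behavior policy $b$ does not appear, so this step places no constraint on $b$.

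Next, I would set the expected $V$-update in \Cref{eq:qvmax_v} to zero at $(Q, V) = (q_*, v_*)$, conditioned only on $S_t = s$ (with the action now sampled from $b$). Using $\max_{a'} q_*(S_{t+1}, a') = v_*(S_{t+1})$, this becomes
\[
v_*(s) = \mathbb{E}_b[R_{t+1} + \gamma v_*(S_{t+1}) \mid S_t = s].
\]
Comparing with the state-value Bellman optimality equation $v_*(s) = \mathbb{E}_{\pi_*}[R_{t+1} + \gamma v_*(S_{t+1}) \mid S_t = s]$ yields exactly the equality stated in the proposition. Equivalently, whenever the stated condition fails at some $s$, the expected $V$-update is nonzero there, so $(q_*, v_*)$ cannot be a fixed point.

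There is no significant technical obstacle in the calculation itself; the interesting point is conceptual, and worth flagging in the write-up. In Q-learning, the $\max$ sits inside an expectation conditioned on $(S_t, A_t)$, so the off-policy nature of $b$ is invisible to the fixed point. In QVMAX, the $\max$ is instead placed inside the $V$-update, which is conditioned only on $S_t$, so the distribution $b(\cdot \mid s)$ leaks into the target and biases $V$ toward an expectation under $b$ rather than under $\pi_*$. The proposition's condition is precisely what is needed for these two expectations to coincide, restoring the Q-learning-like fixed point.
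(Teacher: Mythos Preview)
Your proposal is correct and follows essentially the same approach as the paper: you check that the $Q$-update is automatically invariant at $(q_*,v_*)$ because conditioning on $(S_t,A_t)$ removes any dependence on $b$, and then show that invariance of the $V$-update reduces to the stated equality between the $b$- and $\pi_*$-expectations. The paper's proof is the operator-notation version of exactly this computation, writing the two updates as $\vq \gets \vr + \gamma \mP \vv$ and $\vv \gets \mE_b(\vr + \gamma \mP E \vq)$ and tracing the failure to the mismatch $\mE_b \neq E$.
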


\begin{proof}
    See \Cref{subapp:prop_qvmax_fp}.
\end{proof}

The update fails to correct the distribution of the observed reward, $R_{t+1}$, which is collected by the behavior policy.
This is because \Cref{eq:qvmax_v} is \emph{not} conditioned on the state-action pair, $(S_t,A_t)$, as is normally the case for Q-learning, but is instead conditioned only on the state, $S_t$.
The freedom in the choice of $A_t$ means the expected reward observed from state $S_t$ now depends on behavior policy $b$.
\looseness=-1

To eliminate the bias, the update cannot depend explicitly on the sampled reward, $R_{t+1}$.
We therefore propose to modify \Cref{eq:qvmax_v} in the QVMAX updates such that we have
\begin{align}
    \tag{\ref*{eq:qv-learning_q}}
    
    \,, \\
    \label{eq:bc-qvmax_v}
    V(S_t) &\gets V(S_t) + \alpha \Big( \max_{a \in \A} Q(S_t,a) - V(S_t) \Big)
    \,.
\end{align}
We call this \emph{Bias-Corrected} QVMAX (BC-QVMAX).
This is the proper algorithm for learning $q_*$, as the update now approximates the Bellman optimality relationship $v_*(s) = \max_{a \in \A} q_*(s,a)$.

\medskip  
\begin{restatable}{proposition}{bcqvmaxfp}
    \label{prop:bc-qvmax_fp}
    $\qvstar$ is the unique fixed point of BC-QVMAX.
\end{restatable}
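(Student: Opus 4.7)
The plan is to reduce the BC-QVMAX fixed-point conditions to the standard action-value Bellman optimality equation, whose unique solution is known to be $q_*$.

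First, I would take expectations of both BC-QVMAX updates and set the expected increments to zero. For the $Q$ update, conditioning on $(S_t, A_t) = (s,a)$ gives the fixed-point equation
\begin{equation*}
    Q(s,a) = \sum_{s' \in \S} \sum_{r \in \mathcal{R}} p(s', r \mid s, a)\bigl(r + \gamma V(s')\bigr),
\end{equation*}
assuming the behavior policy $b$ places positive mass on every state-action pair so that every component of $Q$ is actually updated. For the $V$ update in \Cref{eq:bc-qvmax_v}, the right-hand side does not depend on $R_{t+1}$ or $S_{t+1}$, so the fixed-point condition is immediately
\begin{equation*}
    V(s) = \max_{a \in \A} Q(s,a),
\end{equation*}
provided $b$ visits every state $s$.

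Next, I would substitute the second equation into the first to obtain
\begin{equation*}
    Q(s,a) = \sum_{s' \in \S} \sum_{r \in \mathcal{R}} p(s', r \mid s, a)\left(r + \gamma \max_{a' \in \A} Q(s', a')\right),
\end{equation*}
which is precisely the Bellman optimality equation for action values. By the standard result that this equation has $q_*$ as its unique solution, we conclude $Q = q_*$, and then $V(s) = \max_{a \in \A} q_*(s,a) = v_*(s)$ by the well-known relationship between $q_*$ and $v_*$. For existence, I would check in the other direction that $(q_*, v_*)$ satisfies both fixed-point equations, which is immediate from the action-value Bellman optimality equation together with $v_*(s) = \max_a q_*(s,a)$.

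The main thing to be careful about is \emph{why} the bias identified in \Cref{prop:qvmax_fp} does not reappear here: since the modified $V$ update no longer depends on the sampled reward or next state, the expectation over the behavior policy is trivial and introduces no off-policy distortion. This is precisely what allows the two fixed-point conditions to combine cleanly into the action-value Bellman optimality equation, so there is no real technical obstacle beyond noting that $b$ must exercise every state-action pair for the fixed-point equations to be pointwise binding.
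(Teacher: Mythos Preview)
Your proposal is correct and follows essentially the same approach as the paper: both arguments write down the two expected fixed-point equations for BC-QVMAX and substitute one into the other to reduce to the Bellman optimality equation, then invoke its unique solution $q_*$ (and hence $v_* = \max_a q_*$). The paper merely phrases this in operator notation ($\vq \gets \vr + \gamma \mP \vv$, $\vv \gets E\vq$, unrolled to $T\vq$ and $T\vv$) rather than elementwise, and your added remark about the behavior policy needing to cover all state-action pairs is a reasonable caveat that the paper leaves implicit.
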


\begin{proof}
    See \Cref{subapp:prop_bc-qvmax_fp}.
\end{proof}

A peculiarity of the updates in \Cref{eq:qv-learning_q,eq:bc-qvmax_v} is that they closely resemble Q-learning in \Cref{eq:q-learning}.
In fact, if we set $\alpha = 1$, then \Cref{eq:bc-qvmax_v} simply becomes a table overwrite:
$V(S_t) \gets \max_{a \in \A} Q(S_t,a)$.
This would make the algorithm almost the same as Q-learning, but with a delay introduced before bootstrapping.
When $\alpha < 1$, then \Cref{eq:bc-qvmax_v} has an extra smoothing effect, mixing together stale estimates of the maximum action value in each state and exacerbating this delay.

We slightly modify the prediction experiment from \Cref{subsec:on-policy_prediction} to demonstrate the bias of QVMAX.
This time, we target a greedy policy with respect to the current action-value function, making Expected Sarsa equivalent to Q-learning.
We therefore compare Q-learning, QVMAX, and BC-QVMAX.
The optimal policy is to select $a_0$ unconditionally, making $q_*(s,a_0) = 1 \mathbin{/} (1-\gamma)$.
It follows that $q_*(s,a_i) = \gamma \mathbin{/} (1-\gamma)$ for $i \neq 0$.

All experiment and plotting procedures remain the same as before, except that now the RMS error is defined in terms of $q_*$:
i.e., $\|q_* - Q\|_2$.
We plot the results in \Cref{fig:qvmax}.
As can be seen in \Cref{fig:qvmax} (left), and as our theory predicted, QVMAX asymptotes substantially above the zero-error mark due to its biased update.
In contrast, BC-QVMAX achieves a much lower error.
However, as we also discussed above, the smoothing effect of BC-QVMAX makes it similar to, but rather slower than, Q-learning.
The results in \Cref{fig:qvmax} (left) are identical only because $\alpha=1$ happens to be the best step size for both methods, which makes them nearly equivalent (except for the delay mentioned previously).
However, for all $\alpha < 1$, we can see from \Cref{fig:qvmax} (center) that BC-QVMAX slightly lags behind Q-learning.

These results unfortunately indicate that off-policy control with QV-learning algorithms is much harder than on-policy prediction, since model-free estimation of $v_*$ is not straightforward.
While we note that this one experiment is not enough to rule out the viability of QVMAX approaches in this setting, it does provide convincing evidence against it.
In particular, the theoretically correct version of the algorithm, BC-QVMAX, is similar to a learned approximation of Q-learning which introduces delay into the bootstrapping.
This does not serve an immediately obvious benefit and appears to negatively impact sample efficiency, though there is the possibility that this smoothing effect may have utility in some settings (e.g., to enhance stability in tracking problems).

\begin{figure*}[t]
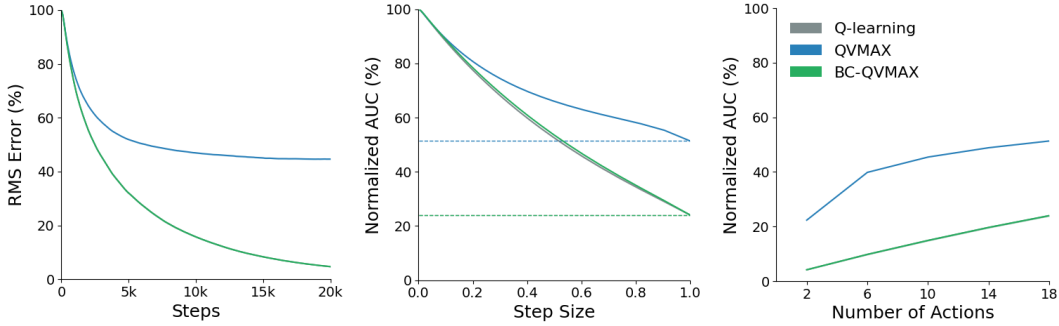

    \centering
    \includegraphics[width=0.32\textwidth]{qvmax_rms-vs-time_18actions.png}
    \hfill
    \includegraphics[width=0.32\linewidth]{qvmax_auc-vs-stepsize_18actions.png}
    \hfill
    \includegraphics[width=0.32\linewidth]{qvmax_auc-vs-actions.png}
    \caption{
        Off-policy control performance of BC-QVMAX compared to QVMAX and Q-learning in the 4-state MDP.
        Experiment setup is the same as that of \Cref{fig:on-policy_prediction}.
        (Note that Q-learning is sometimes eclipsed by BC-QVMAX.)
    }
    \label{fig:qvmax}
\end{figure*}

\section{AV-learning Algorithms}
\label{sec:av-learning}

Rather than learning explicit state- and action-value functions like QV-learning methods do, an alternative approach is to decompose the action-value function into constituent state-value and advantage functions that can be implicitly updated using gradient descent.
This \emph{dueling} strategy was introduced by \citet{wang2016dueling}, but we generalize it in this section.

\subsection{Dueling Q-learning}
\label{subsec:dueling_q-learning}

To obtain a tabular Dueling Q-learning update, we evaluate the chain rule in \Cref{eq:q-learning_sgd}, as discussed earlier in \Cref{subsec:bg_dueling}.
Q-learning's TD error is $\delta_t = R_{t+1} + \gamma \max_{a' \in \A} Q(S_{t+1},a') - Q(S_t,A_t)$.
Because $Q(s,a)$ is defined according to \Cref{eq:dueling_mean}, the chain rule preserves the error, $\delta_t$, inside the quadratic term and then scales it in the following manner:
\begin{align*}
    \Adv(S_t,a) &\gets \Adv(S_t,a) + \alpha \left(\indicator{a = A_t} - \frac{1}{\lvert\mathcal{A}\rvert}\right) \delta_t
    \,,
    \quad \forall a \in \A
    \,,\\*
    V(S_t) &\gets V(S_t) + \alpha \delta_t
    \,.
\end{align*}
It may seem unusual to write out these updates in this manner;
Dueling Q-learning was originally proposed for deep RL, where automatic differentiation would handle the partial derivatives.
Nevertheless, these updates are well-defined for tabular value functions.
Furthermore, they reveal a property behind this particular style of dueling.
At each time step, a single advantage value is incremented in proportion to $\delta_t$, and then the $\abs{\A}$ advantage values are decremented in the same proportion to $\delta_t \mathbin{/} \abs{\A}$.
This implies that the arithmetic mean of the advantages is an \emph{invariant} quantity;
if we initialize the advantage values to zero, then the mean will remain zero throughout training.

\subsection{Regularized Dueling Q-learning}
\label{subsec:soft_rdq}

\begin{wrapfigure}{r}{0.5\textwidth}
    \vspace{-0.18in}
    \centering
    \includegraphics[width=0.8\linewidth]{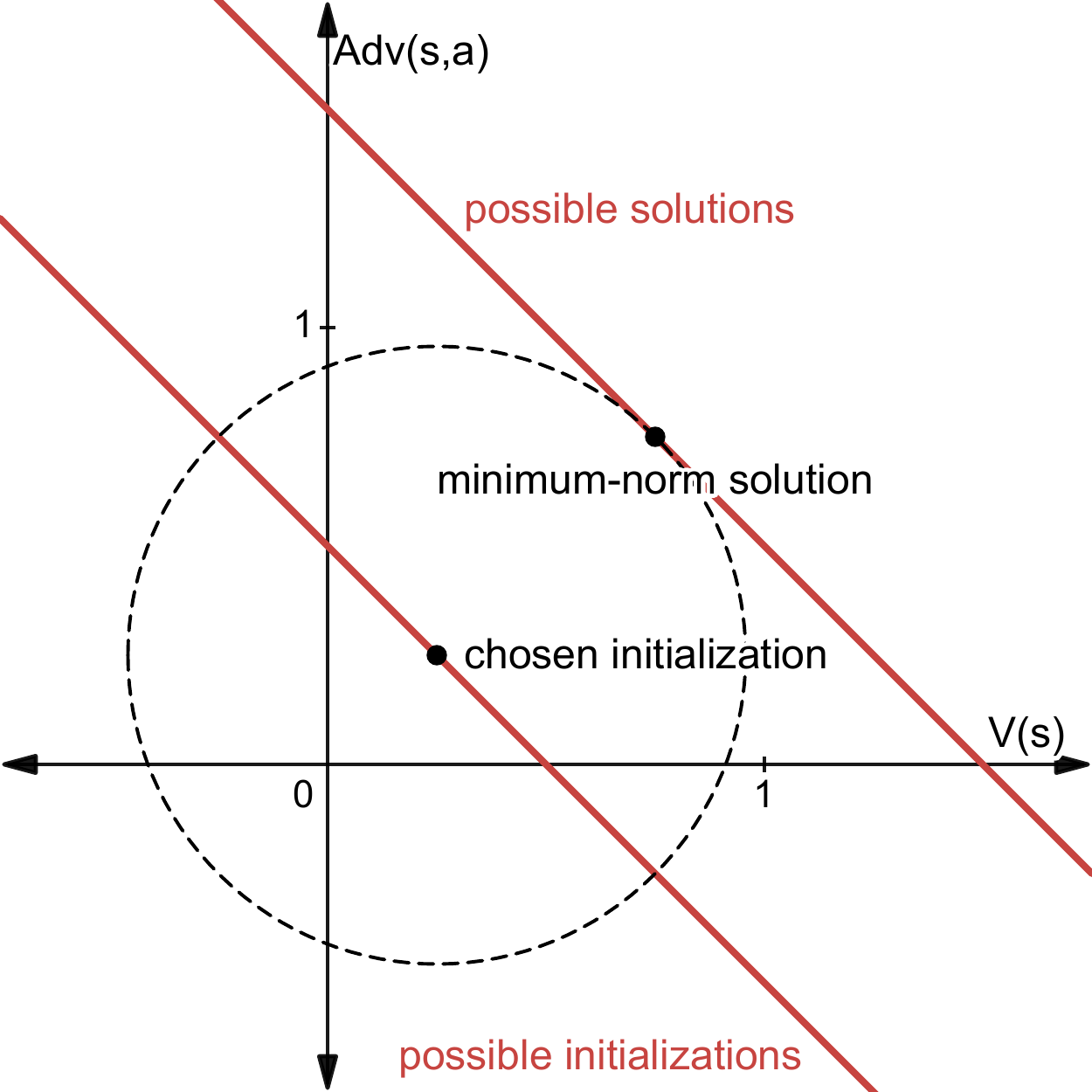}
    \caption{
        RDQ finds the minimum-norm solution to the underdetermined system of equations $Q(s,a) = V(s) + \Adv(s,a)$, $\forall a \in \A$.
        Thus, RDQ may require fewer updates than Dueling Q-learning to converge in practice.
    }
    \label{fig:rdq_geometry}
    \vspace{-0.1in}
\end{wrapfigure}

We propose a new dueling algorithm which does not rely on subtracting the identifiability term in \Cref{eq:dueling_mean}.
We start with the most general decomposition of the action-value function,
\begin{equation}
    \label{eq:av_decomposition}
    Q(s,a) = V(s) + \Adv(s,a)
    \,.
\end{equation}
This decomposition is unidentifiable \citep[][Sec.~3]{wang2016dueling};
we can add a constant to $V(s)$ and subtract the same constant from $\Adv(s,a)$ without changing $Q(s,a)$.
Consequently, there are infinitely many valid decompositions that satisfy \Cref{eq:av_decomposition} and no clear way for an algorithm to choose between them.

Dueling DQN seeks to accurately approximate both $v_\pi(s)$ and ${\Adv_\pi(s,a) \defeq q_\pi(s,a) - v_\pi(s)}$ in order to estimate $q_\pi(s,a)$, which motivates the subtraction of the mean advantage to get \Cref{eq:dueling_mean} from \Cref{eq:av_decomposition}.
The mean advantage is meant to coarsely approximate the expected advantage under the target policy, but sacrifices the precise semantics of $v_\pi$ and $\Adv_\pi$.
We propose an alternative approach in which we search for \emph{any} two functions, $V(s)$ and $\Adv(s,a)$, that accurately reconstruct $q_\pi(s,a)$, further relaxing these semantics in order to expedite learning.

Let us consider the underdetermined system of equations in \Cref{eq:av_decomposition} to motivate our algorithm.
For a particular state-action pair $(s,a)$, we can represent the specific solution found by Dueling Q-learning as an ordered pair:
$(V(s), \Adv(s,a))$.
This particular solution is just one of infinitely many ordered pairs that satisfy \Cref{eq:av_decomposition}, which form a negatively sloped line in the $V$-$\Adv$ plane:
$\Adv(s,a) = q_\pi(s,a) - V(s)$, depicted in \Cref{fig:rdq_geometry}.
When $Q(s,a)$ is initialized, the locus of valid initializations of $V(s)$ and $\Adv(s,a)$ is also a negatively sloped line:
$\Adv(s,a) = Q(s,a) - V(s)$, again depicted in \Cref{fig:rdq_geometry}.
These two lines are parallel, and therefore the shortest directional path between them is the vector orthogonal to both of them---the minimum-norm projection.
This fact remains true regardless of how $Q(s,a)$ is initialized, since the slope of both lines is always $-1$.

For this reason, we hypothesize that a good solution to \Cref{eq:av_decomposition} is the point on the solution line closest to the initialization of $V(s)$ and $\Adv(s,a)$, since it may require fewer updates to reach.
In practice, because the state-value and advantage functions are typically initialized near zero for both tabular and deep RL, we can approximate this solution more simply as the closest point to the origin.

We therefore propose to apply $l_2$ regularization to \Cref{eq:q-learning_sgd} to simultaneously restore identifiability to \Cref{eq:av_decomposition} and encourage the algorithm to find this minimum-norm approximation.
Specifically, we penalize the squared-error loss from \Cref{eq:q-learning_sgd} with a term of
\begin{equation}
    \label{eq:l2_penalty}
    \frac{1}{2} V(s)^2 + \frac{1}{2} \sum_{a \in \A} \Adv(s,a)^2
    \,.
\end{equation}
Since we now have $Q(s,a) = V(s) + \Adv(s,a)$ from \Cref{eq:av_decomposition} (no identifiability term), evaluating the chain rule in \Cref{eq:q-learning_sgd} gives us the following updates:
\begin{align}
    \label{eq:rdq_q}
    \Adv(S_t,a) &\gets (1-\beta) \Adv(S_t,a) + \alpha \indicator{a = A_t} \delta_t
    \,,
    \quad \forall a \in \A
    \,,\\
    \label{eq:rdq_v}
    V(S_t) &\gets (1-\beta) V(S_t) + \alpha \delta_t
    \,,
\end{align}
where $\beta \in [0,1)$ is the regularization coefficient.
We call this new algorithm Regularized Dueling Q-learning (RDQ).
Specifically, we refer to this variant with the $l_2$ penalty as \emph{Soft} RDQ.
In practice, we found Soft RDQ to be noisy in the tabular setting, since the state and advantage values become leaky---they get pushed towards zero by the $1-\beta$ factor, and then have to compensate using the stochastic TD error, $\delta_t$.
However, Soft RDQ is very useful in the case of function approximation, since the penalty in \Cref{eq:l2_penalty} is architecture-agnostic and can be easily combined with complex neural networks.
We experiment with this in \Cref{subsec:deep_rl}.

In the tabular setting, we do not need an $l_2$ penalty to obtain the desired minimum-norm solution.
This is because substituting $\Adv(s,a) = Q(s,a) - V(s)$ into the penalty in \Cref{eq:l2_penalty}, differentiating with respect to $V(s)$, and setting the derivative to zero yields
\begin{equation}
    \label{eq:invariant}
    V(s) = \sum_{a \in \A} \Adv(s,a)
    \,.
\end{equation}
Furthermore, when we remove the $l_2$ penalty from \Cref{eq:rdq_q,eq:rdq_v} by setting $\beta = 0$, we see that both $V(s)$ and exactly one advantage value, $\Adv(s,a)$, are incremented by exactly the same amount at each time step---indicating that \Cref{eq:invariant} is another invariant quantity.
This implies that, regardless of how $V$ and $\Adv$ are initialized, \Cref{eq:invariant} always remains true.\footnote{
    That is, up to a fixed translation determined by the difference at initialization:
    $V_0(s) - \!\! \sum\limits_{a \in \A} \Adv_0(s,a)$.
}
Therefore, rather than using an explicit $l_2$ penalty, we can directly apply the RDQ update without a penalty to achieve the minimum-norm solution in the tabular setting:
\begin{align*}
    \Adv(S_t,a) &\gets \Adv(S_t,a) + \alpha \mathop{\indicator{a = A_t}} \delta_t
    \,,
    \quad \forall a \in \A
    \,,\\*
    V(S_t) &\gets V(S_t) + \alpha \delta_t
    \,.
\end{align*}
We call this variant \emph{Hard} RDQ, since it explicitly follows the minimum-norm path (on average) without the use of a soft penalty.
Unfortunately, there does not appear to be a simple extension of this idea to the function approximation case.
If we were to apply these update rules under function approximation, the different gradients calculated for $V(s)$ and $\Adv(s,a)$ would violate the invariant quantity in \Cref{eq:invariant} and lose the minimum-norm convergence property.

To test this new algorithm, we repeat the off-policy control experiment from \Cref{subsec:qvmax}.
We once again use Q-learning as a baseline, and then additionally compare Dueling Q-learning with Hard RDQ.
The only minor modifications we make is that we now change the suboptimal reward from $0$ to $-1$ (which does not change the optimal policy), set $\gamma = 0.999$, and initialize $V(s)$ and $\Adv(s,a)$ using zero-mean Gaussian noise with a standard deviation of $2$.
The rationale for the nonzero reward is so that $Q$ does not correctly estimate either of the actions' true values initially (on average).

We plot the results in \Cref{fig:dueling}.
All experiment and plotting procedures remain the same as before.
In contrast to the QVMAX algorithm tested earlier, both dueling methods dramatically outperform Q-learning, showing that the advantage decomposition is a highly effective strategy in general.
In the largest instantiation of our MDP, where $\abs{\A} = 18$, Hard RDQ slightly outperforms Dueling Q-learning across the range of step sizes (see \Cref{fig:dueling}; left, center).
The spike at $\alpha = 0.5$ is due to the fact that AV-learning methods sum the updates to $V$ and $\Adv$, making the total effective step size equal to $1$ and therefore unstable.
Across the various MDP instances (see \Cref{fig:dueling}; right), Hard RDQ performs slightly better than Dueling Q-learning in a small majority of cases, but both algorithms perform very well overall.

\begin{figure*}[t]
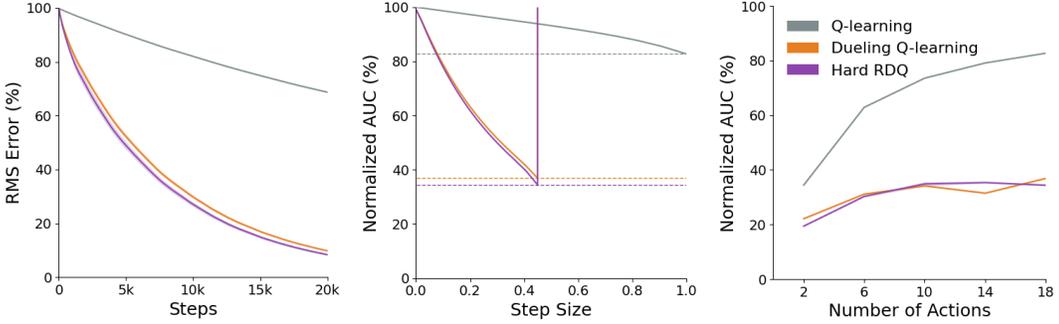

    \centering
    \includegraphics[width=0.32\textwidth]{dueling_rms-vs-time_18actions.png}
    \hfill
    \includegraphics[width=0.32\linewidth]{dueling_auc-vs-stepsize_18actions.png}
    \hfill
    \includegraphics[width=0.32\linewidth]{dueling_auc-vs-actions.png}
    \caption{
        Off-policy control performance of Hard RDQ compared to Dueling Q-learning and Q-learning in the 4-state MDP.
        Experiment setup is the same as that of \Cref{fig:on-policy_prediction}.
    }
    \label{fig:dueling}
\end{figure*}

\subsection{Deep RL Experiments}
\label{subsec:deep_rl}

Because Dueling Q-learning was originally proposed as a network architecture for Deep Q-Network \citep[DQN;][]{mnih2015human}, we test the performance of RDQ in a deep RL setting, where a neural network is used to approximate the action-value function.
We implement our agents using the PFRL library \citep{pfrl}.
Our benchmark is the MinAtar domain \citep{young2019minatar}, which includes five Atari-like games:
Asterix, Breakout, Freeway, Seaquest, Space Invaders.
The state representation for MinAtar is $10 \times 10$ multi-channel binary images displaying various objects and their velocities.
The reward function is the incremental game score.
\looseness=-1

We compare the soft variant of RDQ against DQN and Dueling DQN, using the hyperparameters used by \citet{ceron2021revisiting}.
The action-value function $Q(s,a;\vtheta_t)$ is approximated by a neural network, where $\vtheta_t$ is the network parameters at time $t$.
The network architecture we use for DQN is the same used by \citet{young2019minatar}:
a 16-filter, $3 \times 3$ convolutional layer, a 128-unit dense layer, and a final linear layer which maps to the $\lvert \A \rvert$ action values.
All layers except the last apply a rectified linear unit (ReLU) activation function.
We apply LeCun normal initialization \citep{lecun2002efficient} to the network parameters.

For RDQ and Dueling DQN, we create a dueling architecture following
\citeauthor{wang2016dueling}'s \citeyear{wang2016dueling} procedure.
We duplicate the 128-unit hidden layer to branch the network in parallel streams, and then separately map these into linear outputs of size $\lvert \A \rvert$ and $1$ for estimating $\Adv(s,a;\vtheta_t)$ and $V(s;\vtheta_t)$, respectively.
These are summed together (with broadcasting) to compute $Q(s,a;\vtheta_t)$.
Dueling DQN additionally subtracts the identifiability term, $\frac{1}{\lvert \A \rvert} \sum_{a' \in \A} \Adv(s,a';\vtheta_t)$, per \Cref{eq:dueling_mean}.

The agents execute an $\epsilon$-greedy policy, where $\epsilon$ is fixed to $1$ for the first 1,000 time steps of training and then linearly annealed to $0.01$ over the next 250,000 time steps.
Each transition, $(S_t,A_t,R_{t+1},S_{t+1})$, is stored in a replay buffer with a capacity of 100,000 transitions.
We let $\mathcal{D}_t$ denote the replay memory at time $t$.
Every $4$ time steps, the agents update the parameters using the Adam optimizer \citep{kingma2015adam} with a learning rate of $2.5 \times 10^{-4}$ and a denominator constant of $\epsilon = 3.125 \times 10^{-4}$.
Both DQN and Dueling DQN are trained to minimize the loss
\begin{equation}
    \label{eq:dqn_loss}
    \L^\text{DQN}_t \defeq \frac{1}{2} \,\, \E \! \left[ \bigg(R + \gamma \max Q(S,A;\vtheta^-_t) - Q(S,A;\vtheta_t)\bigg)^{\!2} \right]
    \,,
\end{equation}
where $\vtheta^-_t$ is the target network parameters copied from $\vtheta_t$ every 1,000 time steps.
The expectation in \Cref{eq:dqn_loss} is taken over the uniform distribution of samples $(S,A,R,S^\prime)$ in $\mathcal{D}_t$;
in practice, we approximate this with minibatches of size $32$.
RDQ inherits the same loss, but adds the regularization term:
\looseness=-1
\begin{equation*}
    \L^\text{RDQ}_t \defeq \L^\text{DQN}_t + \frac{\beta}{2} \,\, \E \! \left[ V(S;\vtheta_t)^2 + \sum_{a \in \mathcal{A}} \Adv(S,a;\vtheta_t)^2 \right]
    \,.
\end{equation*}
We choose $\beta = 10^{-3}$ for the regularization strength;
we did not tune this value.

Each agent was trained for a total of 10 million time steps.
Every 1 million time steps, the agent was evaluated with an $\epsilon$-greedy policy for 1,000 episodes with $\epsilon=0.01$.
In \Cref{fig:minatar}, we plot the mean undiscounted return for the evaluation episodes as a function of training time;
these results are averaged over 30 independent trials and the shading indicates 95\% confidence intervals.

\Cref{fig:minatar} depicts our results. 
We see that between DQN and Dueling DQN, there does not appear to be a clearly superior algorithm, and one algorithm may outperform the other depending on the environment.
However, we also see that RDQ significantly outperforms DQN and Dueling DQN in all five environments.
Note that RDQ shares the same architecture as Dueling DQN.
Although $\beta = 10^{-3}$ works well in this setting, more experiments would be needed to determine its efficacy in other domains.
Given that we did not tune $\beta$, it is also possible that performance could improve if tuned.
\looseness=-1

\begin{figure}[t]
    \centering
    \includegraphics[width=0.32\columnwidth]{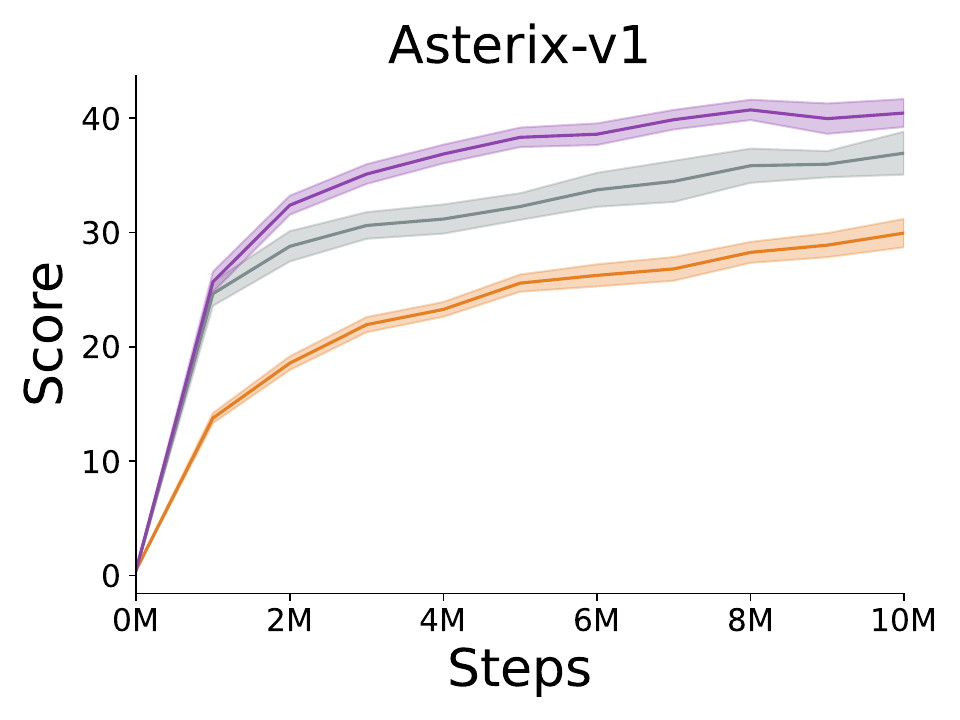}
    \hfill
    \includegraphics[width=0.32\columnwidth]{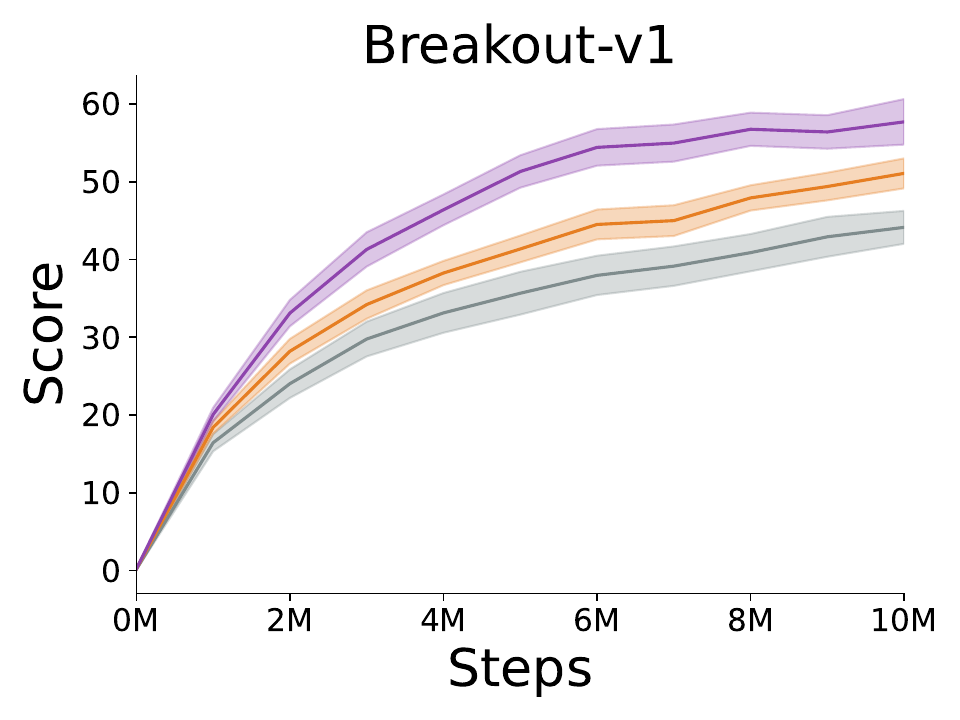}
    \hfill
    \includegraphics[width=0.32\columnwidth]{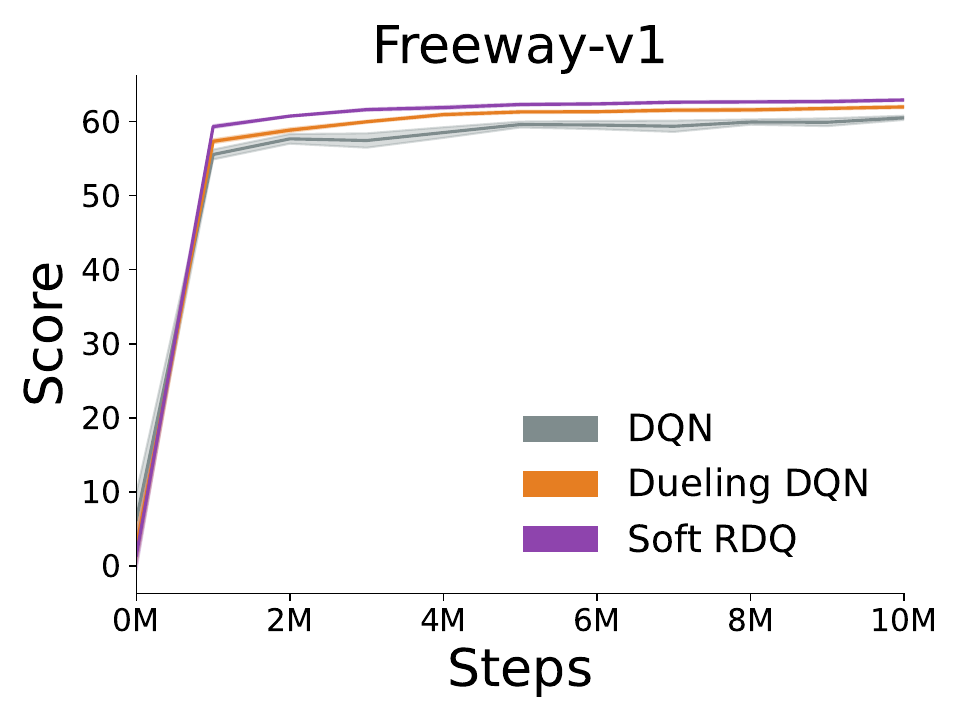}
    \\
    \hspace{\fill}
    \includegraphics[width=0.32\columnwidth]{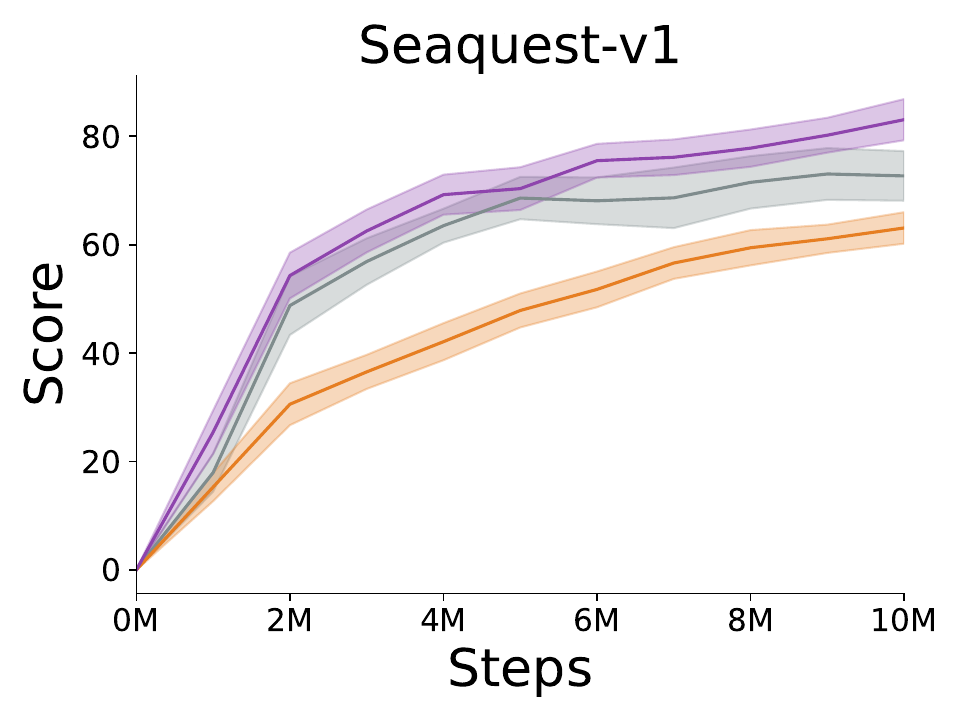}
    \hfill
    \includegraphics[width=0.32\columnwidth]{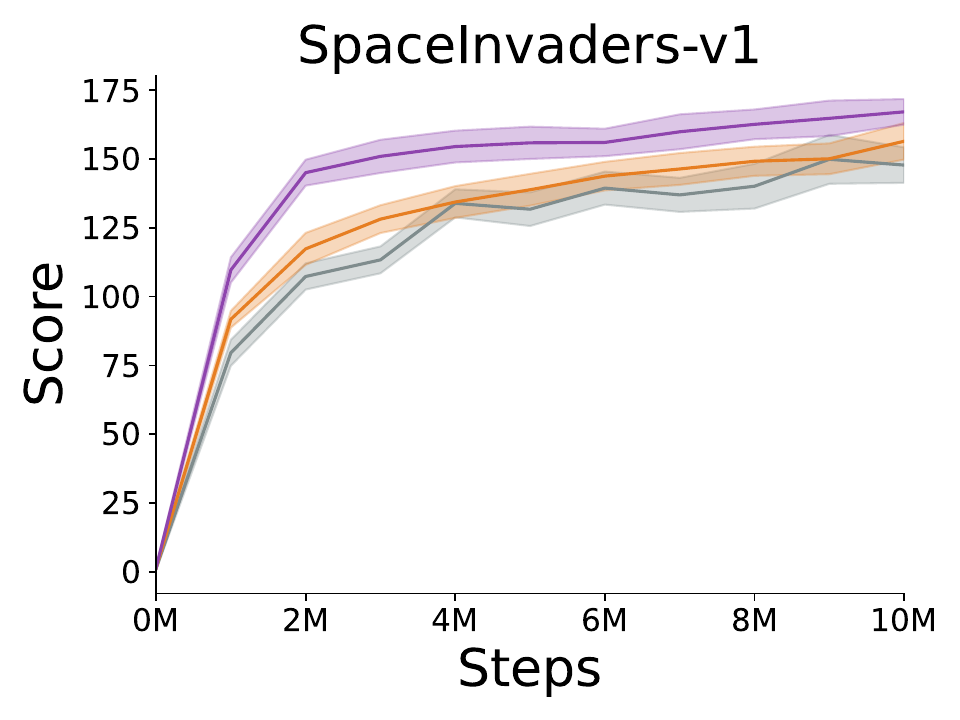}
    \hspace{\fill}
    \caption{Deep RL results for five MinAtar games.
    Averaged across 30 trials; shading represents 95\% confidence intervals.}
    \label{fig:minatar}
\end{figure}

\section{Conclusion}

We made several advances in the understanding of TD algorithms in which state-value functions are learned in tandem with action-value functions.
We showed that QV-learning algorithms have benefits for on-policy prediction---the original setting for which they were proposed---but tend to be much slower when the target policy is greedy.
We also demonstrated that the prevailing off-policy control variant of QV-learning, QVMAX, is biased, and we introduced a new variant called BC-QVMAX which empirically restores convergence.
Lastly, we introduced a novel Dueling Q-learning method called RDQ, which does not require the subtraction of an identifiability term, and has desirable properties in terms of the geometry of the learned value functions.
In a deep RL setting based on the MinAtar games, we showed that RDQ greatly outperforms Dueling DQN, despite having identical network architectures and the same, well-tuned hyperparameters.
Although there is still much to learn about these algorithms, our analysis helps to clarify their efficacy and distinguishing properties, and has already demonstrated potential for the development of new and effective RL algorithms.
\looseness=-1


\appendix

\subsubsection*{Acknowledgments}
\label{sec:ack}

This research was supported in part by the Natural Sciences and Engineering Research Council of Canada (NSERC) and the Canada CIFAR AI Chair Program.
Prabhat Nagarajan has been additionally supported by the Alberta Innovates Graduate Student Scholarship.
Computational resources were provided in part by the Digital Research Alliance of Canada.


\bibliography{main}
\bibliographystyle{imports/rlj}

\beginSupplementaryMaterials

\section{Proofs}
\label{app:proofs}

\begin{table}[b]
    \renewcommand{\arraystretch}{2}
    \centering
    \caption{Fundamental operators for MDPs \citep[][Ch.~2]{daley2025multistep}.}
    \begin{tabular}{ccl}
        \toprule
        Symbol & Input/Output & Definition \\
        \midrule
        $\mP$ & $\RS \to \RSA$ & $(P v)(s,a) \defeq \sum\limits_{s' \in \S} \mathop{v(s')} \sum\limits_{r \in \mathcal{R}} \mathop{p(s',r \mid s,a)}$ \\
        $\mE_\pi$ & $\RSA \to \RS$ & $(E_\pi q)(s) \defeq \sum\limits_{a \in \A} \mathop{\pi(a \mid s)} q(s,a)$ \\
        $E$ & $\RSA \to \RS$ & $(E q)(s) \defeq \max\limits_{a \in \A} q(s,a)$ \\
        \bottomrule
    \end{tabular}
    \label{tab:mdp_ops}
\end{table}

This section contains the proofs for all theoretical results in the paper.
These results rely on the analysis of dynamic-programming operators for MDPs, which we introduce here rather than \Cref{sec:background} for clarity of exposition.

In this context, value functions can be represented as vectors:
$\vv \in \RS$ for state values and $\vq \in \RSA$ for action values.
Each element of a vector corresponds to the value estimate for a state or a state-action pair;
the order of the elements does not matter as long as it is consistent.
Likewise, the expected reward
\begin{equation*}
    r(s,a) \defeq \sum_{r \in \mathcal{R}} \mathop{r} \sum_{s^\prime \in \S} \mathop{p(s',r \mid s,a)}
\end{equation*}
is an action-value function which can be represented as a vector, $\vr \in \RSA$.

Operators are then mappings between these vector spaces, which often represent an expected TD update applied simultaneously to every element.
In other words, TD methods are asynchronous, stochastic approximations of their underlying dynamic-programming operators.
Analyzing these operators gives insight into convergence properties.

We adopt the operator convention of \citet{daley2025multistep}, summarized in \Cref{tab:mdp_ops}, which decomposes the standard Bellman operators into three fundamental operators:
one for state transitions ($\mP$) and two for action selection ($\mE_\pi$ and $E$).
The major benefit to this notation is that these fundamental operators are never overloaded;
each applies to either $\vv$ or $\vq$ but not both.
This allows us to expand the Bellman operators for $\vv$ and $\vq$ into unambiguous expressions below---especially useful for analyzing QV-learning methods.

Given these preliminaries, the Bellman operator, $T_\pi$, is overloaded such that
\begin{align*}
    T_\pi \vq &\defeq \vr + \gamma \mP \mE_\pi \vq
    \,,\\
    T_\pi \vv &\defeq \mE_\pi (\vr + \gamma \mP \vv)
    \,.
\end{align*}
The Bellman expectation equations, \Cref{eq:bellman_q,eq:bellman_v}, can now be expressed succinctly as $\qpi = T_\pi \qpi$ and $\vpi = T_\pi \vpi$.
In other words, the unique fixed point of $T_\pi$ is either $\qpi$ or $\vpi$ depending on whether it acts on action values or state values, respectively.
Additionally, these fixed points are related by $\vpi = \mE_\pi \qpi$.

The Bellman optimality operator, $T$, is overloaded similarly but uses the greedy-policy operator $\mE$ instead of the expected-policy operator $\mE_\pi$:
\begin{align*}
    T \vq &\defeq \vr + \gamma \mP E \vq
    \,,\\
    T \vv &\defeq E (\vr + \gamma \mP \vv)
    \,.
\end{align*}
The subtle difference here is that $T$ is a nonlinear operator and admits the optimal value functions as the fixed points:
$\qstar = T \qstar$ and $\vstar = T \vstar$.
These fixed points are analogously related by $\vstar = E \qstar$.

These complete our operator definitions.
Our following theoretical results in the remainder of this section are derived from the fundamental properties of these operators.

\subsection{Proof of \Cref{theorem:qv-learning_contraction}}
\label{subapp:qv-learning_contraction}

\qvlearningcontraction*

\begin{proof}
The operator updates corresponding to the QV-learning, \Cref{eq:qv-learning_q,eq:qv-learning_v}, are
\begin{align*}
    \vq &= \vr + \gamma \mP \vv
    \,,\\
    \vv &= \mE_b (\vr + \gamma \mP \vv)
    \,.
\end{align*}
The fixed points of these updates are individually $\vq_b$ and $\vv_b$, respectively, which we show now by substituting these quantities.
The first update returns $\vq_b$ because of the relation $\vv_b = \mE_b \vq_b$:
\begin{equation*}
    \vr + \gamma \mP \vv_b = \vr + \gamma \mP \mE_b \vq_b = T_b \vq_b = \vq_b
    \,.
\end{equation*}
The second update is just equivalent to the Bellman operator, $T_b$, applied to $\vv$, which admits $\vv_b$ as its unique fixed point.

We next write these updates as a joint operator update
\begin{equation*}
    \qvbig \gets \underbrace{\begin{bmatrix}
        \vr \\
        \mE_b \vr
    \end{bmatrix}}_{\vb}
     +\ \underbrace{\gamma \begin{bmatrix}
         \vzero & \mP \\
         \vzero & \mE_b \mP
     \end{bmatrix}}_{\mA}
     \qvbig
     \,,
\end{equation*}
which we just established has a fixed point at $\qvb$.
Moreover, this is an affine operator of the form $H \colon \vy \mapsto \vb + \mA \vy$, where we let $\vy = \qv$ for brevity.
To complete the proof, we show that $H$ is a maximum-norm contraction mapping, which implies the fixed point $\qvb$ is unique.
Because each row of $\mA$ is a probability distribution scaled by $\gamma$, it follows that $\maxnorm{\mA} = \gamma$.
For any two vectors $\vy, \vy^\prime \in \RSAS$, we therefore have
\begin{equation*}
    \maxnorm{H \vy - H \vy^\prime} = \maxnorm{\mA (\vy - \vy^\prime)} \leq \gamma \maxnorm{\vy - \vy^\prime}
    \,,
\end{equation*}
so $H$ is indeed a maximum-norm contraction mapping.
By the Banach fixed-point theorem \citep{banach1922operations}, the fixed point $\qvb$ is unique and hence $\lim\limits_{i \to \infty} H^i \vy = \qvb$ for any initial vector $\vy$.
\end{proof}

\subsection{Proof of \Cref{prop:qvmax_fp}}
\label{subapp:prop_qvmax_fp}

\qvmaxfp*

\begin{proof}
The operator updates corresponding to \Cref{eq:qv-learning_q,eq:qvmax_v} are
\begin{align*}
    \vq &\gets \vr + \gamma \mP \vv
    \,,\\
    \vv &\gets \mE_b (\vr + \gamma \mP E \vq)
    \,.
\end{align*}
If $\qvstar$ were a fixed point of QVMAX, then substituting these vectors into the above operator updates would return the same result.

The first update (which reassigns $\vq$) correctly remains invariant when applied to the fixed point.
This is because $\vv_* = E \vq_*$ and therefore
\begin{align*}
    \vr + \gamma \mP \vv_*
    &= \vr + \gamma \mP E \vq_* \\
    &= T \vq_* \\
    &= \vq_*
    \,.
\end{align*}
However, the second update (which reassigns $\vv$) does not remain invariant when applied to the fixed point---the root cause of the bias in QVMAX.
This can be seen because
\begin{align*}
    \mE_b (\vr + \gamma \mP E \vq_*)
    &= \mE_b (\vr + \gamma \mP \vv_*) \\
    &\neq E (\vr + \gamma \mP \vv_*) \\
    &= T \vv_* \\
    &= \vv_*
    \,.
\end{align*}
This mismatch stems from the fact that the updates to $\vv$ are conditioned only on the states of the MDP and do not correct for the influence of taken actions.
As such, the updates are subject to changes to the reward and next-state distributions induced by the particular behavior policy, $b$.
This manifests as the discrepancy $\mE_b \neq E$ in the above equations.
These two operators only coincide in the serendipitous case that $b$ happens to simultaneously maximize the expected $1$-step returns, $\vr + \gamma \mP \vv$, but not in general.
Ultimately, the above analysis demonstrates that $\qvstar$ cannot be the fixed point of QVMAX for an arbitrary behavior policy and MDP, which completes the proof.
\end{proof}

\subsection{Proof of \Cref{prop:bc-qvmax_fp}}
\label{subapp:prop_bc-qvmax_fp}

\bcqvmaxfp*

\begin{proof}
The operator updates corresponding to BC-QVMAX, \Cref{eq:qv-learning_q,eq:bc-qvmax_v}, are
\begin{align*}
    \vq &\gets \vr + \gamma \mP \vv
    \,,\\
    \vv &\gets E \vq
    \,.
\end{align*}
We can unroll these updates by substituting them into each other---exploiting the fact that the updates are interleaved and not conducted simultaneously.

The first update (which reassigns $\vq$) becomes
\begin{align*}
    \vq &\gets \vr + \gamma \mP \vv \\
    &= \vr + \gamma \mP E \vq \\
    &= T \vq
    \,.
\end{align*}
The second update (which reassigns $\vv$) becomes
\begin{align*}
    \vv &\gets E \vq \\
    &= E (\vr + \gamma \mP \vv) \\
    &= T \vv
    \,.
\end{align*}
Both unrolled updates are equivalent to their overloaded Bellman optimality operators, $T$, which are contraction mappings and respectively admit $\vq_*$ and $\vv_*$ as their unique fixed points.
Therefore, $\qvstar$ is the unique fixed point of BC-QVMAX.
\end{proof}


\end{document}